\newtheorem{theorem}{Theorem}[section]
\newtheorem{assumption}[theorem]{Assumption}
\newtheorem{lemma}[theorem]{Lemma}
\newtheorem{definition}[theorem]{Definition}
\newcommand{\norm}[1]{\left\lVert#1\right\rVert}
\begin{document}
% If your paper is accepted and the title of your paper is very long,
% the style will print as headings an error message. Use the following
% command to supply a shorter title of your paper so that it can be
% used as headings.
%
%\runningtitle{I use this title instead because the last one was very long}

% If your paper is accepted and the number of authors is large, the
% style will print as headings an error message. Use the following
% command to supply a shorter version of the authors names so that
% they can be used as headings (for example, use only the surnames)
%
\runningauthor{W. Xu, J. Ma, K. Xu, H. Bastani, and O. Bastani}

\twocolumn[

\aistatstitle{Uniformly Conservative Exploration in Reinforcement Learning}

\aistatsauthor{%
Wanqiao Xu\\
Stanford University\\
\texttt{wanqiaoxu@stanford.edu } \\
\And
Jason Yecheng Ma \\
University of Pennsylvania \\
\texttt{jasonyma@seas.upenn.edu} \\
\And
Kan Xu \\
University of Pennsylvania \\
\texttt{kanxu@sas.upenn.edu} \\
\AND
Hamsa Bastani \\
University of Pennsylvania \\
\texttt{hamsab@wharton.upenn.edu} \\
\And
Osbert Bastani \\
University of Pennsylvania \\
\texttt{obastani@seas.upenn.edu} \\
}

\aistatsaddress{} ]

\begin{abstract}
A key challenge to deploying reinforcement learning in practice is avoiding excessive (harmful) exploration in individual episodes. We propose a natural constraint on exploration---\textit{uniformly} outperforming a conservative policy (adaptively estimated from all data observed thus far), up to a per-episode exploration budget.
We design a novel algorithm that uses a UCB reinforcement learning policy for exploration, but overrides it as needed to satisfy our exploration constraint with high probability. Importantly, to ensure unbiased exploration across the state space, our algorithm adaptively determines when to explore. We prove that our approach remains conservative while minimizing regret in the tabular setting.
We experimentally validate our results on a sepsis treatment task and an HIV treatment task, demonstrating that our algorithm can learn while ensuring good performance compared to the baseline policy for every patient; the latter task also demonstrates that our approach extends to continuous state spaces via deep reinforcement learning.
\end{abstract}

\section{INTRODUCTION}

Reinforcement learning is a promising approach to learn policies for sequential decision-making to enable data-driven decision-making. For instance, it can be used to help manage health conditions such as sepsis \citep{komorowski2018artificial} and chronic illnesses \citep{zhou2018personalizing}, which require the clinician to make sequences of decisions regarding treatment. Other applications include adaptively sequencing educational material for students \citep{mandel2014offline} or learning inventory control policies with uncertain demand \citep{giannoccaro2002inventory, keller2006automatic}. 

The core challenge in reinforcement learning is how to balance the exploration-exploitation tradeoff---i.e., how to balance taking exploratory actions (to estimate the transitions and rewards of the underlying system) and exploiting the knowledge acquired thus far (to make good decisions). However, in high-stakes settings, exploration can be costly or even unethical---for instance, taking exploratory actions on patients or students can lead to adverse outcomes that could have been avoided.

One solution is \emph{conservative} exploration~\citep{garcelon2020conservative}, where the agent is required to avoid underperforming a baseline policy (a handcrafted heuristic or a policy trained on offline data) by more than some small \emph{exploration budget}. This strategy ensures that the agent does not concentrate exploration (and accrue a large amount of regret) early on; instead, it is forced to balance exploration across time.

%As a consequence, there has been great interest in safe reinforcement learning \citep{garcia2015comprehensive}, where the algorithm explores in a way that satisfies some kind of safety property. For instance, one notion of safety is to restrict the system to stay in a safe region of the state space~\citep{li2020robust}. However, many desirable safety properties cannot be formulated this way---e.g., we may want to ensure that a patient does not suffer avoidable adverse outcomes, but some adverse outcomes may be unavoidable even under the optimal policy. Instead, we consider a safety property based on the following intuition:
However, there are two key shortcomings of conservative exploration. First, it only requires that the learning algorithm outperforms the baseline on \emph{average} across all episodes so far. Thus, the agent could still concentrate exploration on a single episode at a time---indeed, existing algorithms for conservative exploration use exactly such a strategy. Concentrating exploration in a single episode remains problematic in many settings; for instance, in healthcare settings, episodes may correspond to individual patients, and in education settings, they may correspond to individual students.

Second, this strategy only considers a single, fixed baseline policy. However, in practice, the initial baseline policy may not be very good---e.g., a handcrafted heuristic policy may perform significantly worse than the optimal policy. Ideally, the baseline would be updated over time to account for all observations so far. For instance, if the algorithm has discovered that a treatment achieves good outcomes for the current patient, then it is obligated to use either that treatment or an alternative that is only slightly worse. Taken together, we are interested in the following constraint on exploration:
\begin{quote}
\emph{With high probability, the algorithm should never take actions significantly worse than the ones known to be good based on the knowledge accumulated so far.}
\end{quote}
By ``knowledge accumulated so far'', we mean all observations that have been gathered so far. Next, by ``actions known to be good'', we mean high-value actions according to \emph{offline} (or \emph{batch}) reinforcement learning algorithms~\citep{ernst2005tree,levine2020offline}, which are designed to provide conservative estimates of the value function based on historical data~\citep{yu2020mopo,kumar2020conservative}.
%Building on these ideas, we formalize a ``good action'' to be one for which the conservative lower bound on its value is high. We refer to the policy that always takes the action with the best conservative value estimate as the \emph{baseline policy}.
Then, our constraint is that, with high probability, the algorithm never takes an action that is significantly worse than using the current baseline policy. We refer to this constraint as \emph{uniformly conservative exploration}. 

Uniformly conservative exploration is significantly harder to satisfy compared to the existing notion of conservative exploration while achieving sublinear regret. Intuitively, we can achieve conservative exploration by simply using the baseline policy for a certain number of episodes; since the exploration budget is a fraction of the accumulated regret, we will have earned enough slack to use an existing algorithm like UCBVI~\citep{azar2017minimax} for an entire episode. By using UCBVI continuously for entire episodes in this manner, we can sufficiently explore the entire state space, ensuring sublinear regret.

However, this strategy no longer works when the exploration constraint must hold for \textit{each} episode, since UCBVI may not be able to explore for a full episode. In particular, consider a strategy where we use UCBVI at the start of each episode until we exhaust our exploration budget, and then switch to the baseline policy. Then, UCBVI will only get to explore near the beginning of each episode, failing to learn about states that can only be visited later in the episode, yielding linear regret.

To remedy this issue, we propose an algorithm that adaptively determines when to explore, with the goal of ``stitching'' together exploration across multiple episodes; these form a single \emph{meta-episode}, which is equivalent to the information gained from using UCBVI for an entire episode. To do so, our algorithm records the state where it switches from the UCBVI policy to the baseline policy, and then only restarts using the UCBVI policy once it encounters the same state in a future episode. We prove that our algorithm explores uniformly conservatively, and obtains regret guarantees similar to those of UCBVI in the number of episodes---i.e., the cost of our constraint is only a constant factor.

Finally, we test the performance of our algorithm on two real-world tasks: learning treatments for sepsis and human immunodeficiency virus (HIV). The latter task has a continuous state space; to this end, we leverage a natural extension of our approach to a deep reinforcement learning algorithm. Our results show that our algorithm can learn as efficiently as existing reinforcement learning algorithms while significantly reducing violations of our uniformly conservative exploration constraint.

% \textcolor{red}{
% Finally, we test the performance of our algorithm on two real-world tasks---namely, learning treatments for sepsis and human immunodeficiency virus (HIV); the former MDP has a finite state space whereas the latter has a continuous state space. Our results show that our algorithm can learn as efficiently as existing reinforcement learning algorithms while significantly reducing safety violations.
% }

Our main contributions in this paper include:
\begin{itemize}
    \item We propose a new notion of \textit{uniformly conservative exploration} (equation~\eqref{eqn:safety}) for reinforcement learning;
    \item We design a novel \textit{meta-episodic} online reinforcement learning algorithm that satisfies our exploration constraint and achieves sublinear regret;
    \item We empirically demonstrate the conservativeness and efficiency of our algorithm on real-world cases in learning treatments for sepsis and HIV.
\end{itemize}

\section{PROBLEM FORMULATION}

\textbf{Preliminaries.}
Consider a Markov decision process (MDP) $M=\langle \mathcal{S}, \mathcal{A}, P, R\rangle$, with finite states $\mathcal{S}$, finite actions $\mathcal{A}$, transition probability $P(s' \mid s,a)$, rewards $R(s,a)\in[0, 1]$\footnote{This is only for simplicity. One can always rescale our result according to the scale of the rewards.}, and time horizon $H\in\mathbb{N}$\footnote{The horizon $H$ is the length of episode.}, where $s', s\in \mathcal{S}$ and $a\in\mathcal{A}$. Thus, we have $S=|\mathcal{S}|$ states and $A=|\mathcal{A}|$ actions. Our analysis is based on $N$ episodes. 
We consider policies $a=\pi_t(s,z)$ with internal state $z\in Z$, along with internal state transitions $z'=\sigma_t(s,z,a)$ for each step $t\in[H]$. Our uniformly conservative exploration property (described in \eqref{eqn:safety}) is a constraint on the reward accrued by our policy across multiple steps in the MDP; thus, our policy uses an internal state to track this information and ensure that our policy satisfies this property.

We define a \emph{rollout} as a random sequence of length $H$, i.e., $\alpha=((s_1,a_1,r_1,s_2),\cdots,(s_H,a_H,r_H))$, where $a_t=\pi_t(s_t,z_t)$, $r_t=R(s_t,a_t)$, $s_{t+1}\sim P(\cdot\mid s_t,a_t)$, and $z_{t+1}=\sigma_t(s_t,z_t,t)$. We assume $s_1$ is deterministic and $z_1$ is given. We denote the distribution over rollouts by $\alpha\sim D_{\pi,\sigma}(\cdot)$, and the rollout of episode $k\in[N]$ by $\alpha_k$. Define the $Q$ function as
\begin{multline*}
Q^{(\pi,\sigma)}_t(s,z,a)=R(s,a) + \sum_{s'\in S}P(s'\mid s,a) \\
\cdot V^{(\pi,\sigma)}_t(s',\sigma_t(s,z,a))
\end{multline*}
with $Q^{(\pi, \sigma)}_H(s,z,a)=0$, and the value function as
\begin{align*}
V^{(\pi,\sigma)}_t(s,z)&=Q^{(\pi,\sigma)}_t(s,z,\pi_t(s, z)),
\end{align*}
with $V^{(\pi, \sigma)}_H(s,z)=0$. 
%With some notation overloading, for policies $\pi_t(s_t,\varnothing)$ without an internal state or internal state transitions, we write $\pi_t(s_t) = \pi_t(s_t,\varnothing)$, and define the corresponding $Q$ function and value function as $Q_t^{(\pi)}(s,a)=Q_t^{(\pi,\varnothing)}(s,\varnothing,a)$ and $V_t^{(\pi)}(s) = V_t^{(\pi,\varnothing)}(s,\varnothing)$, respectively. 

\textbf{Regret.}
We let $\pi^*_t(s)$ denote the (deterministic) optimal policy, and $Q^*_t(s,a)$ and $V^*_t(s)$ the optimal $Q$- and value functions respectively. $P$ and $R$ are initially unknown. At episode $k\in[N]$, we choose a policy $(\pi^k,\sigma^k)$ along with an initial internal state $z_{k,1}$ based on the observations so far, and observe a new rollout $\alpha_k\sim D_{\pi^k,\sigma^k}(\cdot)$. Our goal is to choose $(\pi^k,\sigma^k)$ and $z_{k,1}$ to minimize the \emph{cumulative regret}
\begin{align*}
\rho=\mathbb{E}\left[\sum_{k=1}^NV_1^*(s_1)-V_1^{(\pi^k,\sigma^k)}(s_1,z_{k,1})\right],
\end{align*}
where the expectation is taken over the randomness of the rollouts $\{\alpha_1, \cdots, \alpha_N\}$.

\textbf{Uniformly conservative exploration.}
Intuitively, our exploration constraint says we do not take actions in an episode that achieve significantly worse rewards than a baseline policy $\bar{\pi}^k$ trained on all observations so far (for simplicity, we assume $\bar{\pi}^k$ is only updated at the end of an episode). Then, it ensures that we do not take harmful action sequences that would have been avoided by $\bar{\pi}^k$.

The strength of the exploration constraint depends on $\bar{\pi}^k$; thus, these bounds should be as tight as possible to avoid harm. We build on a UCB strategy called UCBVI~\citep{azar2017minimax}, a state-of-the-art algorithm that achieves minimax regret guarantees. This algorithm constructs policies based on values that are optimistic compared to the true values; its minimax guarantees stem from the fact that its confidence intervals around its value estimates are very tight. We modify UCBVI to instead construct policies based on conservative values, thereby resulting in a variant of conservative $Q$-learning, an offline reinforcement learning algorithm~\citep{kumar2020conservative}. We describe our approach in detail in Section~\ref{sec:alg}.

Now, given $\eta,\delta\in\mathbb{R}_{>0}$, our exploration constraint says that with probability at least $1-\delta$ (over the randomness of $\{\alpha_k\}_{k\in[N]}$), for every $k\in[N]$ and $t\in[H]$, we have
\begin{align}
\label{eqn:safety}
z_t^*&\coloneqq\sum_{\tau=1}^t\max\left\{V^{(\bar{\pi}^k)}_\tau(s_{k,\tau})-Q^{(\bar{\pi}^k)}_\tau(s_{k,\tau},a_{k,\tau}),0\right\}\le\eta.
\end{align}
We call $z_t^*$ the \emph{reward deficit}, since it is the deficit in reward compared to $\bar{\pi}^k$, and $\eta$ the \emph{exploration budget}, since it bounds how much exploration we can do.
\begin{definition}\label{defn:safety}
An algorithm $\pi$ is uniformly conservative if equation~\eqref{eqn:safety} is satisfied for any $k\in[N]$ and $t\in[H]$ with at least a probability of $1-\delta$.
\end{definition}

To understand (\ref{eqn:safety}), consider the alternative
\begin{align}
\label{eqn:safetynaive}
&V^{(\bar\pi^k)}_t(s_{k,1})-V^{(\pi^k,\sigma^k)}(s_{k,1},z_{k,1})\nonumber\\
&=\mathbb{E}\left[\sum_{t=1}^HV^{(\bar\pi^k)}_t(s_{k,t})- Q^{(\bar\pi^k)}_t(s_{k,t},a_{k,t})\right]\le\eta,
\end{align}
where the equality follows by a telescoping sum argument (see, e.g., Lemma 2.1 in~\cite{bastani2018verifiable}). Intuitively, (\ref{eqn:safetynaive}) says that our cumulative expected reward must be within $\eta$ of that of $\bar{\pi}^k$ across the entire episode. In contrast, (\ref{eqn:safety}) is significantly stronger, since the maximum ensures that we cannot compensate for performing worse than $\bar{\pi}^k$ in one part of an episode by performing better later.

Note that our algorithm can always use $\bar{\pi}^k$, which satisfies (\ref{eqn:safety}); the challenge is how to take exploratory actions in a way that minimizes regret while exploring uniformly conservatively.

\textbf{Assumptions.}
Ensuring uniformly conservative exploration and sublinear regret is impossible without assumptions on our MDP. Otherwise, any exploration by an agent could lead to a violation. Our first assumption says that the MDP is ergodic (e.g., it is also required for conservative exploration under an infinite horizon~\citep{garcelon2020conservative}). Let $\Pi$ be the set of all deterministic policies.
\begin{assumption}
\label{assump:ergodic}
\rm
 Let $T^{\pi}(s’,s)$ be the minimum time it takes to transition from state $s’$ to state $s$ following policy $\pi$. Then, $\Upsilon \coloneqq \max_{s'\neq s}\max_{\pi\in\Pi}\mathbb{E}[T^{\pi}(s',s)] \le H/2$.
\end{assumption}
Here, $\Upsilon$ is the worst-case diameter of the MDP---i.e., the worst-case time it takes for any policy $\pi$ to reach any state $s$ from any state $s'$. This assumption says that every state is visited by any policy $\pi$; for instance, if there is a state not visited by one of our baseline policies $\bar{\pi}^k$, then we would not be able to explore that state, potentially leading to linear regret. Our second assumption says that any \emph{single} step of exploration in the MDP does not violate our exploration constraint:
\begin{assumption}
\label{assump:qvalue}
\rm
For any $\pi\in\Pi$, $s\in S$ and $a\in A$, we have $V_t^{(\pi)}(s)-Q_t^{(\pi)}(s,a)\le\eta/2$.
\end{assumption}
That is, using an arbitrary action $a$ in state $s$ and then switching to $\pi$ (i.e., $Q_t^{(\pi)}(s,a)$), is not much worse than using $\pi$ (i.e., $V_t^{(\pi)}(s)$). Note that we must at least assume $V_t^{(\bar\pi^k)}(s)-Q_t^{(\bar\pi^k)}(s,a)\le\eta$; otherwise, any exploratory action could potentially violate the constraint.
%Our algorithm only requires this assumption for the baseline policies $\bar{\pi}^k$ that occur during learning.
The stricter $\eta/2$ tolerance enables us to continue to take exploratory steps if we have only accrued error $\le\eta/2$ so far: if the tolerance were $\eta$, then if we take a single step such that $V_t^{(\bar\pi^k)}(s)-Q_t^{(\bar\pi^k)}(s,a)>0$, then at each subsequent step $t'>t$, we cannot take an exploratory action, since we run the risk that $(V_t^{(\bar\pi^k)}(s)-Q_t^{(\bar\pi^k)}(s,a))+(V_{t'}^{(\bar\pi^k)}(s)-Q_{t'}^{(\bar\pi^k)}(s,a))>\eta$, which would violate the constraint.

\section{ALGORITHM}
\label{sec:alg}

The key challenge is how to take exploratory actions to minimize regret while ensuring that our exploration constraint holds. We build on upper confidence bound value iteration (UCBVI)~\citep{azar2017minimax}, which obtains near-optimal regret guarantees for finite-horizon MDPs. Like other UCB algorithms, it relies on \textit{optimism}---i.e., it takes actions that optimize the cumulative reward under optimistic assumptions about its estimates of the MDP. A natural strategy is to use the internal state to keep track of the reward deficit accrued so far; then, we can use the UCBVI policy from the beginning of each episode until we exhaust our exploration budget, after which we switch to the baseline policy.

The challenge is that the UCBVI regret guarantees depend crucially on using the UCBVI policy for the entire horizon, or at least for extended periods of time. The reason is that selectively using UCBVI at the beginning of each episode biases the portions of the state space where UCBVI is used; for instance, if there are some states that are only reached late in the episode, then we may never use UCBVI in these states, causing us to underexplore and accrue high regret.

To avoid this issue, our algorithm uses the UCBVI policy in portions of each episode in a sequence of episodes, such that we can ``stitch'' these portions together to form a single \emph{meta-episode} that is mathematically equivalent to using the UCBVI policy for an entire episode. The cost is that we may require multiple episodes to obtain a single UCBVI episode, which would slow down exploration and increase regret. However, we can show that the number of episodes in a meta-episode is not too large with high probability, so the strategy actually achieves similar regret as UCBVI.

\textbf{Overall algorithm.}
Our algorithm is summarized in Algorithm~\ref{alg:sucrl2}; $m$ indexes a single meta-episode, and $n$ indexes an episode of $m$. To be precise, we use \emph{meta-episode} to refer to an iteration $m$ of the outer loop of Algorithm~\ref{alg:sucrl2}, and \emph{episode} to refer to an iteration $(m,n)$ of the inner loop; we alternatively index episodes by $k$ when referring to the sequence of all episodes. Then, we use \emph{rollout} to refer to the sequence $\alpha_{m,n}$ of observations $(s,a,r,s')$ during an episode, and a \emph{meta-rollout} to refer to the rollout $\hat\alpha_m$ consisting of a subset of the observations in $\{\alpha_{m,1},\cdots,\alpha_{m,N_m}\}$, where $N_m$ is the total number of episodes in meta-episode $m$. In particular, $\hat\alpha_m$ consists of observations $(s,a,r,s')$ where the UCB policy $\hat\pi$ was used; our algorithm uses $\hat\pi$ in a way that ensures that $\hat\alpha_m$ is equivalent to a single rollout sampled from the MDP while exclusively using $\hat\pi$.

At a high level, at the beginning of each episode $k$, our algorithm constructs the baseline policy $\bar{\pi}$ using the current rollouts $U=\{\alpha_1,\cdots,\alpha_{k-1}\}$. Furthermore, at the beginning of each meta-episode $m$, our algorithm constructs the UCBVI policy $\hat{\pi}$ using the current meta-rollouts $\hat{U}=\{\hat\alpha_1,\cdots,\hat\alpha_{m-1}\}$. Then, it obtains a sequence of rollouts using $\tilde{\pi}$, which combines the current $\bar{\pi}$ and $\hat{\pi}$ to explore uniformly conservatively. It does so in a way that it can ``stitch'' together portions of the rollouts using $\hat{\pi}$ into a single rollout $\hat{\alpha}_m$ whose distribution equals the distribution over rollouts induced by using $\hat{\pi}$. In other words, $\hat{\alpha}_m$ is equivalent to using $\hat{\pi}$ for a single episode. Thus, each meta-rollout of our algorithm corresponds to a single UCBVI episode. As long as the number of episodes per meta-episode is not too large, we obtain similar regret as UCBVI. We detail our algorithm below.

\begin{algorithm}[h]
\begin{algorithmic}
\Procedure{UnifConservUCBVI}{$M,N,\delta$}
\State Initialize rollout history $U\gets\varnothing$
\State Initialize meta-rollout history $\hat{U}\gets\varnothing$
\For{$m\in\mathbb{N}$}
\State Compute $\hat{\pi}$ using $\hat{U}$
\State Initialize target state $s'\gets s_1$
\For{$n\in\mathbb{N}$}
\State Compute $\bar{\pi}$, $\hat{V}^{(\bar{\pi})}$, and $\bar{Q}^{(\bar{\pi})}$ using $U$
\State Obtain a rollout $\alpha_{m,n}$ using $z_1=(s',0)$, $\sigma$ as in (\ref{eqn:sigma}), and $\tilde{\pi}$ as in (\ref{eqn:pi}), and add it to $U$
\State Update $s'$ to be the next target state, or break if done (and terminate if $|U|\ge N$)
\EndFor
\State Construct $\hat{\alpha}_m$ from $\alpha_{m,1},\cdots,\alpha_{m,N_m}$ and add it to $\hat{U}$
\EndFor
\EndProcedure
\end{algorithmic}
\caption{Uniformly Conservative UCBVI}
\label{alg:sucrl2}
\end{algorithm}

\textbf{Uniformly conservative exploration.}
Our algorithm ensures uniformly conservative exploration by using the policy internal state to keep track of the reward deficit. In particular, suppose we have $\hat{V}_t^{(\bar\pi)}$ satisfying $\hat{V}_t^{(\bar\pi)}(s)\ge V_t^{(\bar\pi)}(s)$ and $\bar{Q}_t^{(\bar{\pi})}$ satisfying $\bar{Q}_t^{(\bar\pi)}(s,a)\le Q_t^{(\bar\pi)}(s,a)$ with high probability; then, we use internal state $z_1=0$ and
\begin{align*}
\sigma_t(s,z,a)&=z+\max\{\hat{V}_t^{(\bar\pi)}(s)-\bar{Q}_t^{(\bar\pi)}(s,a),0\}\\
&=z+\hat{V}_t^{(\bar\pi)}(s)-\bar{Q}_t^{(\bar{\pi})}(s,a),
\end{align*}
where the second equality follows since we always have $\hat{V}_t^{(\bar\pi)}(s)\ge\bar{Q}_t^{(\bar{\pi})}(s,a)$. In particular, $z_t\ge z_t^*$ with high probability. Then, our algorithm switches to $\bar{\pi}$ as soon as $z_t>\eta/2$ (i.e., $z_{t-1}\le\eta/2$)---i.e., it uses the \emph{shield policy}
\begin{align*}
\tilde{\pi}_t(s,z)=\begin{cases}
\hat{\pi}_t(s)&\text{if}~z_t\le\eta/2 \\
\bar{\pi}_t(s)&\text{otherwise},
\end{cases}
\end{align*}
where $\hat{\pi}$ is the current UCBVI policy. Thus, we have
\begin{align*}
z_t^*\le z_t\le z_{t-1}+\eta/2\le\eta,
\end{align*}
where the second inequality follows by Assumption~\ref{assump:qvalue}. Since using $\bar\pi$ does not increase the reward deficit, $z_H^*\le\eta$, so (\ref{eqn:safety}) holds---i.e., $\tilde{\pi}$ ensures the exploration constraint with high probability.

\textbf{Meta-episodes.}
As defined, $\tilde{\pi}$ implements the na\"{i}ve strategy of using $\hat{\pi}$ at the beginning of each episode, and switching to $\bar{\pi}$ if it can no longer satisfy the exploration constraint. However, as discussed above, this strategy may explore the state space in a biased way, accruing linear regret. Instead, we modify $\tilde{\pi}$ to construct a single UCBVI episode (called a \emph{meta-episode}) across multiple actual episodes, which ensures exploration equivalent to UCBVI. We denote such a meta-episode by $m\in[M]$ and an episode in meta-epsiode $m$ by $n\in[N_m]$ (i.e., there are $N_m$ episodes in $m$, so we have $N=\sum_{m=1}^MN_m$ total episodes); we index our episodes by $(m,n)$ instead of $k$.

At a high level, in the first episode of a meta-episode $m$ (i.e., $n=1$), we use $\hat{\pi}$ from the beginning. If $\tilde{\pi}$ uses $\hat{\pi}$ for the entire episode, then this single episode is equivalent to a UCBVI episode, so we are done. Otherwise, we switch to using $\bar{\pi}$ at some step $t$ (i.e., at state $s_{m,1,t}$). Then, in the next episode, we initially use $\bar{\pi}$ until some step $t'$ such that $s_{m,2,t'}=s_{m,1,t}$; at this point, we switch to $\hat{\pi}$ until we have exhausted our exploration budget. If we do not encounter $s_{m,1,t}$, then we try again in the next episode; since the MDP is ergodic, we are guaranteed to find $s_{m,1,t}$ after a few tries with high probability. We continue this process until we have used $\hat{\pi}$ for $H$ steps (i.e., a full UCB episode).
Formally, we augment the internal state of our policy with the target state $s$ from which we want to continue using $\hat{\pi}^m$ (or $s_1$ for the initial episode), so $z=(s',\zeta)\in S\times\mathbb{R}$. In particular, we let
\begin{align}
\label{eqn:z}
z_{m,n,1}&=\begin{cases}
(s_1,0)&\text{if}~n=1 \\
(s_{m,n}',0)&\text{otherwise},
\end{cases}
\end{align}
where $s_{m,n}'$ is the target state for episode $n$---i.e., the state $s_{m,n',t}$ at which we switched to $\bar{\pi}$ for some $n'<n$, such that we did not encounter $s_{m,n',t}$ in episodes $n'<n''<n$. Next, we have 
\begin{align}
\label{eqn:sigma}
&\sigma_t\Big(s,(s',\zeta),a\Big) \nonumber\\
&=\begin{cases}
(s',0)&\text{if}~s'\neq\varnothing~\text{and}~s'\neq s \\
\Big(\varnothing,\zeta+\hat{V}_t^{(\bar\pi)}(s)-\bar{Q}_t^{(\bar{\pi})}(s,a)\Big)&\text{otherwise}.
\end{cases}
\end{align}
That is, the internal state remains $z=(s',0)$ until encountering the target state $s'$; at this point, it becomes $(\varnothing,0)$ and starts accruing reward deficit as before. Finally, we have
\begin{align}
\label{eqn:pi}
\tilde{\pi}_t\Big(s,(s',\zeta)\Big)&=\begin{cases}
\hat{\pi}_t(s)&\text{if}~s'=\varnothing~\text{and}~\zeta\le\eta/2 \\
\bar{\pi}_t(s)&\text{otherwise},
\end{cases}
\end{align}
i.e., we use the UCBVI policy $\hat{\pi}$ if we have reached the target state $s'$ and do not risk exceeding our exploration budget; otherwise, we use the backup policy $\bar{\pi}$.

Finally, a meta-episode terminates once we have used $\hat{\pi}$ at least $H$ times across the rollouts $\alpha_{m,1},\cdots,\alpha_{m,n}$; in this case, we have $n=N_m$ episodes in meta-episode $m$. Then, our algorithm constructs the corresponding \emph{meta-rollout} $\hat{\alpha}_m$ by concatenating the portions of $\alpha_{m,1},\cdots,\alpha_{m,n}$ that use $\hat{\pi}$. In the very last episode $\alpha_{m,n}$, we may continue using $\hat{\pi}$ even after we have obtained the necessary $H$ steps using $\hat{\pi}$; we ignore the extra steps so $\hat{\alpha}_m$ is exactly $H$ steps long.

\textbf{Policy construction.}
Finally, we describe how our algorithm constructs the quantities $\bar{Q}^{(\bar{\pi})}$, $\hat{V}_t^{(\bar{\pi})}(s)$, $\bar{\pi}$, and $\hat{\pi}$. The constructions are based on the UCBVI algorithm; in particular, note that on step $m$, $\hat{U}$ is equivalent to a set of $m-1$ UCBVI rollouts, so we can use it to construct a UCBVI policy $\hat{\pi}$ for the $m$th episode.\footnote{By only using meta-episodes to construct $\hat{\pi}$, the meta-episodes exactly mimic the execution of UCBVI; in practice, we can use the entire dataset $U$ to construct $\hat{\pi}$.}
In particular, we construct $\hat{\pi}$ by estimating the transitions and rewards based on the data collected so far (i.e., the tuples $(s,a,r,s')$ collected on steps using the UCBVI policy, so $a=\hat{\pi}(s)$, $r=R(s,a)$, and $s'\sim P(\cdot\mid s,a)$), to obtain
\begin{align*}
\hat{P}(s'\mid s,a)&=\frac{|\{(s,a,\cdot,s')\in U\}|}{N(s,a)}\\
\hat{R}(s,a)&=\frac{\sum_{(s,a,r,\cdot)\in U}r}{N(s,a)}
\end{align*}
where $N(s,a)=|\{(s,a,\cdot,\cdot)\in U\}|$ is the number of observations of state-action pair $(s,a)$ in the data collected so far. Then, we use value iteration to solve the Bellman equations
\begin{align*}
\hat{Q}_t^*(s,a)&=\hat{R}'(s,a)+\gamma\cdot\sum_{s'\in S}\hat{P}(s'\mid s,a)\cdot\hat{V}_{t+1}^*(s')\\
\hat{V}_t^*(s)&=\max_{a\in A}\hat{Q}_t^*(s,a),
\end{align*}
where $\hat{R}'(s,a)=\hat{R}(s,a)+b(s,a;N(s,a))$, where $b(s,a;N)=4H\sqrt{SL/\max\{1,N\}}$
is a bonus term, and where $L=\log(5SAH\sum_{m=1}^M N_m/\delta)$. Finally, we take $\hat{\pi}_t(s)=\operatorname*{\arg\max}_{a\in A}\hat{Q}^*(s,a)$.

We construct $\bar{Q}^{(\bar\pi)}$ and $\hat{V}^{(\bar{\pi})}$ similarly. For $\bar{Q}^{(\bar\pi)}$, we use the above strategy except we subtract the bonus---i.e., letting $\bar{R}'(s,a)=\hat{R}(s,a)-b(s,a;N(s,a))$, we have
\begin{align*}
\bar{Q}_t^*(s,a)&=\bar{R}'(s,a)+\gamma\cdot\sum_{s'\in S}\hat{P}(s'\mid s,a)\cdot\bar{V}_{t+1}^*(s')\\
\bar{V}_t^*(s)&=\max_{a\in A}\bar{Q}_t^*(s,a).
\end{align*}
Then, we take $\bar\pi_t(s)=\operatorname*{\arg\max}_{a\in A}\bar{Q}_t^*(s,a)$. Finally, for $\hat{V}^{(\bar{\pi})}$, we add the bonus, but use value iteration for policy evaluation instead of policy optimization---i.e.,
\begin{align*}
\hat{Q}_t^{(\bar\pi)}(s,a)&=\hat{R}'(s,a)+\gamma\cdot\sum_{s'\in S}\hat{P}(s'\mid s,a)\cdot\hat{V}_{t+1}^{(\bar\pi)}(s')\\
\hat{V}_t^{(\bar\pi)}(s)&=\hat{Q}_t^{(\bar\pi)}(s,\bar\pi(s)).
\end{align*}

\textbf{Deep reinforcement learning.}
We can straightforwardly adapt our algorithm to MDPs with continuous states using deep reinforcement learning. To this end, we replace the conservative $Q$ function $\bar{Q}$ using the $Q$ function learned via conservative $Q$-learning (CQL)~\citep{kumar2020conservative}; we replace the optimistic value function $\hat{V}$ using a value function learned via deep $Q$-learning with an optimistic bonus. Note that the MDP may in general never return to exactly the same state (since the states are continuous); instead, we check if the current state \textit{approximately} matches (e.g., within a small distance) the target state $s'$. Our HIV experiment successfully implements this approach.

% {\color{red}
% \textbf{Deep reinforcement learning.}
% %
% Finally, we can straightforwardly adapt our algorithm to MDPs with continuous state spaces by using deep reinforcement learning. First, we replace the conservative $Q$ function $\bar{Q}$ using the $Q$ function learned using conservative $Q$ learning (CQL)~\cite{kumar2020conservative}. Similarly, we replace the optimistic value function $\hat{V}$ using a value function learned using deep $Q$ learning with an optimistic bonus. Finally, the MDP may in general never return to exactly the same state; instead, we use an approximate check to determine whether the current state matches the target state $s'$ in the policy.
% }

\section{THEORETICAL GUARANTEES}
\label{sec:theory}

All our results are conditioned on a high-probability event $\mathcal{E}$ that (i) our confidence sets around the estimated transitions $\hat{P}$ and rewards $\hat{R}$ hold, and (ii) we find the target state $s'$ in a reasonable number of episodes (see Lemma~\ref{lem:n3}). This event holds with probability at least $1-\delta$; see Appendix~\ref{sec:event}.

First, we prove our algorithm satisfies our exploration constraint.
\begin{theorem}\label{thm:safety}
On event $\mathcal{E}$, Algorithm~\ref{alg:sucrl2} satisfies (\ref{eqn:safety}) for all $k\in[N]$.
\end{theorem}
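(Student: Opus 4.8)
The plan is to show that the exploration constraint (\ref{eqn:safety}) holds \emph{deterministically} on the event $\mathcal{E}$; since $\mathcal{E}$ occurs with probability at least $1-\delta$, this is precisely Definition~\ref{defn:safety}. Fix an episode $k=(m,n)$, write $\bar\pi=\bar\pi^k$, and let $z_\tau=(s_\tau',\zeta_\tau)$ be its internal-state trajectory. Because $z_t^*$ is a partial sum of non-negative terms it is non-decreasing in $t$, so it suffices to bound $z_H^*$. By (\ref{eqn:sigma}) and (\ref{eqn:pi}), within the episode $\tilde\pi$ uses the UCB policy $\hat\pi$ only on a single contiguous block of steps $\mathcal{T}\subseteq[H]$ (empty if $\hat\pi$ is never used): we play $\bar\pi$ while chasing the target state $s'$, switch to $\hat\pi$ once $s'=\varnothing$ and $\zeta\le\eta/2$, and revert to $\bar\pi$ permanently once $\zeta$ first exceeds $\eta/2$. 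On every step $\tau\notin\mathcal{T}$ we have $a_\tau=\bar\pi_\tau(s_\tau)$, so $V_\tau^{(\bar\pi)}(s_\tau)-Q_\tau^{(\bar\pi)}(s_\tau,a_\tau)=0$ by the definition of $V^{(\bar\pi)}$, and such steps contribute nothing; hence $z_H^*=\sum_{\tau\in\mathcal{T}}\max\{V_\tau^{(\bar\pi)}(s_\tau)-Q_\tau^{(\bar\pi)}(s_\tau,a_\tau),0\}$. In particular, only part (i) of $\mathcal{E}$ (the confidence sets) is needed here: if the target state is never encountered the episode uses $\bar\pi$ throughout and $z_H^*=0$. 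So assume $\mathcal{T}\neq\varnothing$ and write $\mathcal{T}=\{\tau_0,\dots,\tau_1\}$.

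Next I would play the internal-state surrogate $\zeta$ against the true deficit on the block $\mathcal{T}$. On $\mathcal{E}$ the confidence bounds give $\hat V_\tau^{(\bar\pi)}(s_\tau)\ge V_\tau^{(\bar\pi)}(s_\tau)$ and $\bar Q_\tau^{(\bar\pi)}(s_\tau,a_\tau)\le Q_\tau^{(\bar\pi)}(s_\tau,a_\tau)$, while $\hat V_\tau^{(\bar\pi)}(s_\tau)\ge\bar Q_\tau^{(\bar\pi)}(s_\tau,a_\tau)$ holds unconditionally by construction of the optimistic and conservative value functions. Hence for each $\tau\in\mathcal{T}$ the $\sigma$-increment from (\ref{eqn:sigma}), namely $\zeta_{\tau+1}-\zeta_\tau=\hat V_\tau^{(\bar\pi)}(s_\tau)-\bar Q_\tau^{(\bar\pi)}(s_\tau,a_\tau)$, satisfies $\max\{V_\tau^{(\bar\pi)}(s_\tau)-Q_\tau^{(\bar\pi)}(s_\tau,a_\tau),0\}\le\zeta_{\tau+1}-\zeta_\tau$. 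Summing this over $\tau=\tau_0,\dots,\tau_1-1$ (all of $\mathcal{T}$ except its last step) telescopes: $\sum_{\tau=\tau_0}^{\tau_1-1}\max\{\dots\}\le\zeta_{\tau_1}-\zeta_{\tau_0}\le\zeta_{\tau_1}$, using $\zeta_{\tau_0}\ge0$. Since $\tilde\pi$ plays $\hat\pi$ at step $\tau_1$, (\ref{eqn:pi}) forces $\zeta_{\tau_1}\le\eta/2$, so the exploratory steps before the last contribute at most $\eta/2$ to $z_H^*$.

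Finally, for the single last exploratory step $\tau_1$, the contribution $\max\{V_{\tau_1}^{(\bar\pi)}(s_{\tau_1})-Q_{\tau_1}^{(\bar\pi)}(s_{\tau_1},a_{\tau_1}),0\}\le\eta/2$ follows \emph{directly} from Assumption~\ref{assump:qvalue} applied to the deterministic policy $\bar\pi$ with action $a_{\tau_1}$. Adding the two bounds yields $z_H^*\le\eta/2+\eta/2=\eta$, hence $z_t^*\le\eta$ for all $t$, which proves the theorem on $\mathcal{E}$.

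I expect the meta-episode bookkeeping behind the first two paragraphs to be the main obstacle, rather than any single inequality. One must verify carefully that $\mathcal{T}$ is genuinely a contiguous block within an episode (so the telescoping is legitimate, and there is no re-entry into $\hat\pi$ after the budget is spent), that $\zeta$ is reinitialized to $0$ at the start of each episode so that it tracks the \emph{per-episode} deficit $z_t^*$, and that the $\bar\pi$-prefix used to return to the target state $s'_{m,n}$ — together with the trailing $\bar\pi$-steps after $\tau_1$, on which $\zeta$ may keep increasing harmlessly — contributes exactly zero to $z_t^*$ at \emph{every} intermediate $t$, not merely at $t=H$. A secondary point worth isolating is the split between the last exploratory step and all earlier ones: because $z_t^*$ is defined via the \emph{true} $V^{(\bar\pi)},Q^{(\bar\pi)}$, Assumption~\ref{assump:qvalue} bounds the last step verbatim, whereas the accumulated deficit of the earlier exploratory steps is controlled only through the optimistic/conservative surrogate $\zeta$, which is why the confidence bounds in $\mathcal{E}$ are invoked precisely there.
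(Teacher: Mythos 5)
Your proof is correct and follows essentially the same route as the paper: event $\mathcal{E}$ (via Lemma~\ref{lem:ucblcb}) makes the internal-state surrogate an upper bound on the true per-step deficit, only the contiguous block of $\hat\pi$-steps contributes, the switching rule in (\ref{eqn:pi}) keeps the surrogate at most $\eta/2$ up to the last exploratory step, and Assumption~\ref{assump:qvalue} pays the final $\eta/2$. If anything, your split is slightly more careful than the paper's inductive step, which invokes Assumption~\ref{assump:qvalue} to bound the surrogate increment $\hat V^{(\bar\pi)}-\bar Q^{(\bar\pi)}$ (where the assumption only literally controls the true $V^{(\bar\pi)}-Q^{(\bar\pi)}$), whereas you apply it only to the true deficit of the single last exploratory step and control everything earlier through $\zeta$.
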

\begin{proof}
First, we show that $z_t\le\eta$ for all $t\in[H]$. Consider following cases at step $t$: (i) if $\tilde\pi$ uses $\hat\pi$, then $z_t\le\eta/2$, (ii) if $\tilde\pi$ switches to $\bar\pi$ on step $t$, then $z_t\le z_{t-1}+\eta/2\le\eta$, and (iii) otherwise, $z_t=z_{t-1}$ remains the same, so the claim follows by induction. As a consequence, it suffices to show that $z_t\ge z_t^*$ on event $\mathcal{E}$. To this end, the following lemma says that the high probability upper and lower bounds $\hat{V}^{(\bar\pi)}$ and $Q_t^{(\bar\pi)}(s,a)$ used to construct $z_t$ are correct.
\begin{lemma}\label{lem:ucblcb}
On event $\mathcal{E}$, for all $s\in S$, $a\in A$, $k\in[N]$, and $t\in[H]$, we have (i) $\bar{Q}_{k,t}^{(\pi)}(s,a)\le Q_t^{(\pi)}(s,a)$, and (ii) $\hat{V}_{k,t}^{(\pi)}(s)\ge V_t^{(\pi)}(s)$.
\end{lemma}
This result is based on standard arguments; we give a proof in Appendix~\ref{sec:lem:ucblcb:proof}. Now, by Lemma \ref{lem:ucblcb},
\begin{align*}
z_t
\ge \sum_{\tau=1}^t\max\left\{V_\tau^{(\bar\pi^k)}(s_{k,\tau})-Q_\tau^{(\bar\pi^k)}(s_{k,\tau},a_{k,\tau}),0\right\}
&=z_t^*
\end{align*}
on event $\mathcal{E}$, so the claim holds.
\end{proof}

Next, we prove that our algorithm has sublinear regret.
\begin{theorem}
\label{thm:regret}
On event $\mathcal{E}$, the cumulative expected regret 
of Algorithm~\ref{alg:sucrl2} is
\begin{multline*}
\rho \le 20HL\sqrt{SAHN}+250H^2S^2AL^2\\
+\frac{960H^3S}{\eta}\sqrt{2ALHN},
\end{multline*}
where $L = \log(5SAH^2MN)$, and where the expectation is taken over the randomness during all of the rollouts taken. Furthermore, letting $T=H\sum_{m=1}^M N_m=HN$ be the total number of time-steps by the end of meta-episode $M$, the regret satisfies $\rho=\tilde{O}(H^3\sqrt{SAT})$.
\end{theorem}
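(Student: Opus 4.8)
The plan is to run the standard optimism-plus-telescoping UCBVI regret argument on the ``stitched'' UCB trajectories, and to pay separately --- by a counting argument --- for the steps on which the backup policy $\bar\pi$ is played. Fix a meta-episode $m$ and let $\hat V^m$ be the optimistic value function Algorithm~\ref{alg:sucrl2} builds $\hat\pi^m$ from; by the optimism half of Lemma~\ref{lem:ucblcb} applied to $\pi^*$, on $\mathcal E$ we have $\hat V_1^m(s_1)\ge V_1^*(s_1)$. Hence for every episode $k=(m,n)$ the Bellman-surplus identity gives $V_1^*(s_1)-V_1^{(\tilde\pi^k,\sigma^k)}(s_1,z_{k,1})\le \hat V_1^m(s_1)-V_1^{(\tilde\pi^k,\sigma^k)}(s_1,z_{k,1})=\mathbb E_{\tilde\pi^k}\big[\sum_{t=1}^H \xi_t^k\big]$, where $\xi_t^k:=\hat V_t^m(s_{k,t})-R(s_{k,t},a_{k,t})-\sum_{s'}P(s'\mid s_{k,t},a_{k,t})\hat V_{t+1}^m(s')$. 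Summing over the $N$ episodes and replacing conditional expectations by realizations introduces a martingale-difference sum controlled by Azuma--Hoeffding (a lower-order $\tilde O(H\sqrt{HN})$ term), so it remains to bound $\sum_k\sum_{t=1}^H\xi_t^k$ on $\mathcal E$.

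I would split each step $(k,t)$ according to which policy $\tilde\pi^k$ played at $s_{k,t}$ (equation~\eqref{eqn:pi}). On a \emph{UCB step}, $a_{k,t}=\hat\pi^m_t(s_{k,t})$ so $\hat V_t^m(s_{k,t})=\hat Q_t^m(s_{k,t},a_{k,t})$ and $\xi_t^k=(\hat R'-R)(s_{k,t},a_{k,t})+\sum_{s'}(\hat P-P)(s'\mid s_{k,t},a_{k,t})\hat V_{t+1}^m(s')$, which on $\mathcal E$ is at most $2b(s_{k,t},a_{k,t};N(s_{k,t},a_{k,t}))$ plus the usual second-order transition-error term. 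The key point is that, by the stitching construction, the $M$ meta-rollouts $\hat\alpha_1,\dots,\hat\alpha_M$ --- together with the counts $N(\cdot,\cdot)$ that drive $\hat\pi^m$ --- are distributed exactly as the execution of UCBVI for $M$ episodes; hence the UCB-step contribution to $\sum_k\sum_t\xi_t^k$ is bounded by UCBVI's regret over $M$ episodes. The pigeonhole estimate $\sum_{s,a}\sum_{j\le N(s,a)}j^{-1/2}=O(\sqrt{SA\cdot MH})$ on the summed bonuses, plus $M\le N$, then gives the first two summands $20HL\sqrt{SAHN}+250H^2S^2AL^2$ (the second being the standard UCBVI correction term). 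The at-most-$H$ ``ignored'' UCB steps per meta-episode are instead counted with the backup steps below.

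On a \emph{backup step}, $a_{k,t}=\bar\pi^k_t(s_{k,t})$ need not maximize $\hat Q_t^m(s_{k,t},\cdot)$, so we only have the crude bound $\xi_t^k\le[\hat V_t^m(s_{k,t})-\hat Q_t^m(s_{k,t},a_{k,t})]+2b(\cdots)\le \hat V_t^m(s_{k,t})+2b(\cdots)$, i.e.\ $O(H)$ per step up to the exploration bonus. Hence backup steps contribute at most $O(H)$ times their number, and that number is at most $H\sum_m(N_m-1)$ plus the ignored UCB steps ($N_m$ being the number of episodes in meta-episode $m$, so $\sum_m N_m=N$). Bounding $\sum_m(N_m-1)$ is exactly the role of Lemma~\ref{lem:n3}: on $\mathcal E$ it combines (i) ergodicity (Assumption~\ref{assump:ergodic}, $\Upsilon\le H/2$) with Markov's inequality and a union bound, so each target state in~\eqref{eqn:z}--\eqref{eqn:sigma} is re-reached within a controlled number of episodes, and (ii) an upper bound on how fast the deficit counter $\zeta$ in~\eqref{eqn:sigma} rises --- via the estimation gaps between $\hat V^{(\bar\pi)},\bar Q^{(\bar\pi)}$ and the true $V^{(\bar\pi)},Q^{(\bar\pi)}$ (which are $O(b)$ on $\mathcal E$ and shrink with the visit counts) and the $\eta/2$ per-step slack of Assumption~\ref{assump:qvalue} --- so that each ``stitch'' absorbs a non-trivial share of the $H$ UCB steps a meta-episode needs. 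Feeding the resulting per-meta-episode bound into $\sum_m N_m=N$, $M\le N$, and the $1/\eta$ normalization of the budget yields the third summand $\tfrac{960H^3S}{\eta}\sqrt{2ALHN}$.

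Summing the three pieces gives the displayed bound; everything is deterministic on $\mathcal E$, so the expectation statement is immediate, and substituting $T=H\sum_m N_m=HN$ and $L=\log(5SAH^2MN)=\tilde O(1)$ gives $\rho=\tilde O(H^3\sqrt{SAT})$. The main obstacle is the backup-step count, i.e.\ Lemma~\ref{lem:n3}: one must show that assembling a UCBVI episode out of fragments scattered across episodes does not inflate the episode count, which requires composing a high-probability hitting-time argument for returning to a target state with a potential-type argument showing that the deficit counter rises slowly enough --- relative to the $\eta/2$ budget and the decaying estimation error --- that progress toward the $H$ required UCB steps stays steady. Everything else is bookkeeping around the standard UCBVI regret proof.
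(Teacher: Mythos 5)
Your decomposition is the same as the paper's: the regret is split into the UCBVI-equivalent regret of the stitched meta-rollouts (bounded by the standard UCBVI analysis over $M\le N$ meta-episodes) plus $H$ times the number of ``extra'' episodes spent on the backup policy, and the latter count is controlled by combining a hitting-time bound (ergodicity, Assumption~\ref{assump:ergodic}, plus Markov's inequality --- this is Lemma~\ref{lem:n3}) with a bound on how often the $\eta/2$ budget can be exhausted. So the architecture of your proof matches the paper's.

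One step in your sketch needs to be sharpened, because as phrased it would not give the stated $\sqrt{N}$ dependence in the third term. You describe the budget-exhaustion control as showing that ``each stitch absorbs a non-trivial share of the $H$ UCB steps a meta-episode needs,'' i.e.\ a per-meta-episode bound on $N_m$ that you then feed into $\sum_m N_m=N$. A uniform per-meta-episode bound $N_m\le C(H,\eta,\delta)$ combined with $M\le N$ only yields $\bar\rho=O(NHC)$, which is linear in $N$. The paper instead uses a \emph{global amortized} count: each episode on which the budget is exhausted contributes at least $\eta/2$ to the accumulated surrogate deficit, the deficit only accrues on meta-rollout steps (where $\hat\pi$ deviates from $\bar\pi$), and the total surrogate deficit $\sum_m\sum_{t}\bigl(\hat V_t^{(\hat\pi^m)}-\bar Q_t^{(\bar\pi)}\bigr)$ over all meta-rollout steps telescopes into a sum of exploration bonuses bounded by $O(H^2S\sqrt{ALHN})$ via the pigeonhole argument (Lemmas~\ref{lem:Nm} and~\ref{lem:alg_regret}). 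Dividing by $\eta/2$ bounds the \emph{total} number of budget-exhaustion episodes across all meta-episodes by $O\bigl(H^2S\sqrt{ALHN}/\eta\bigr)$, which is where the $\sqrt{N}/\eta$ in the third summand comes from; the hitting-time and ``$H/4$ steps of $\hat\pi$'' cases then multiply this count by only a $\log(1/\delta)$ factor (Lemma~\ref{lem:Nmall}). You do mention that the estimation gaps shrink with the visit counts, so the ingredient is present in your write-up, but the accounting must be done globally over all meta-episodes rather than meta-episode by meta-episode. With that correction your argument coincides with the paper's.
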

\begin{proof}
The main idea is to bound the regret by the regret of the meta-rollouts (which correspond to UCBVI rollouts), plus the regret of the shield policy $\tilde\pi$ on the remaining steps---i.e., $\rho=\hat{\rho}+\bar{\rho}$, where
\begin{align*}
\hat\rho&=\mathbb{E}\left[\sum_{m=1}^MV_1^*(s_1)-V_1^{(\hat\pi^m)}(s_1)\right]\\
\bar\rho&=\mathbb{E}\left[\sum_{m=1}^M\sum_{n=1}^{N_m}(V_1^*(s_1)-V_1^{(\bar\pi^{m,n})}(s_1))\mathds{1}((n,t)\not\in\hat\alpha_m)\right],
\end{align*}
where $(n,t)\not\in\hat\alpha_m$ denotes that the $t$th step $(s_{m,n,t},a_{m,n,t})$ of episode $n$ is not included in meta-rollout $\hat\alpha_m$. By equivalence to UCBVI, $\hat\rho$ is bounded by the UCBVI regret:
\begin{lemma}
\label{lem:rhohat}
On event $\mathcal{E}$, we have
\begin{align*}
\hat{\rho}\le20HL\sqrt{SAHN}+250H^2S^2AL^2.
\end{align*}
\end{lemma}
The proof is based on the UCBVI regret analysis; for completeness, we give a proof in Appendix~\ref{sec:lem:rhohat:proof}. Thus, we focus on bounding $\bar\rho$. First, we have the straightforward bound,
\begin{align}
\label{eqn:barrhobound}
\bar\rho\le\mathbb{E}\left[\sum_{m=1}^MH(N_m-1)\right],
\end{align}
which follows since the maximum regret during a single episode is $H$ (since the rewards are bounded by $1$), and since we can also omit the steps for which $(n,t)\in\hat\alpha_m$, of which there are exactly $H$.

As a consequence, the key challenge in bounding $\bar{\rho}$ is proving that the number of episodes $N_m$ in a meta-episode becomes small---in particular, once $N_m=1$, then the entire (single) rollout $\alpha_{m,1}$ is part of the meta-rollout $\hat\alpha_m$, so the second term in the regret is zero.

To prove that $N_m$ becomes small, we note that for any episode, one of the following conditions must hold: (i) the exploration budget is exhausted---i.e., $z_H^*\ge\eta/2$, (ii) the algorithm explores using $\hat\pi$ for at least $H/4$ time steps, or (iii) the episode does not reach the target state $s'$ in the first $3H/4$ time steps; in particular, if (iii) does not hold, then either the episode uses $\hat\pi$ for the final $H/4$ steps of that episode (so (ii) holds) or the exploration budget is exhausted (so (i) holds). We let $N_m^1,N_m^2,N_m^3$ denote the number of episodes that satisfy the three respective cases in meta-episode $m$; note that either $N_m=1$ (i.e., always use the UCBVI policy) or $N_m=N_m^1+N_m^2+N_m^3$.
We bound the three possibilities separately. First, we show that number of episodes $N_m^1$ in case (i) is bounded by the UCBVI regret (i.e., the regret of the meta-episode), which is sublinear.
\begin{lemma}\label{lem:Nm}
On event $\mathcal{E}$, we have
\begin{equation*}
N_m^1\le\frac{2}{\eta}\sum_{t=1}^H\hat{V}_t^{(\hat\pi^m)}(\hat{s}_{m,t})-\bar{Q}_t^{(\bar\pi^{m,n})}(\hat{s}_{m,t},\hat{\pi}(\hat{s}_{m,t})).
\end{equation*}
where $L=\log(5SAH\sum_{m=1}^M N_m/\delta)$, and $N_m(s,a)$ is the total number of observations of the state-action pair $(s,a)$ prior to meta-episode $m$.
\end{lemma}
Intuitively, this lemma follows since if our algorithm exhausts the exploration budget, then it explores sufficiently; thus, the number of times $N_m^1$ that the exploration budget is exhausted cannot be too large. We give a proof in Appendix~\ref{sec:lem:Nm:proof}. The left-hand side of the bound is essentially (but not exactly) the UCBVI regret, and we can bound it using the same strategy. In particular, we have:
\begin{lemma}
\label{lem:alg_regret}
On event $\mathcal{E}$, we have
\begin{multline*}
\sum_{m=1}^M\sum_{t=1}^H\hat{V}_t^{(\hat\pi^m)}(\hat{s}_{m,t})-\bar{Q}_t^{(\bar\pi^{m,n_t})}(\hat{s}_{m,t},\hat{\pi}(\hat{s}_{m,t}))\\
\le12H^2S\sqrt{ALHN}.
\end{multline*}
\end{lemma}
The proof is based on the same strategy as UCBVI, so we defer it to Appendix~\ref{sec:lem:alg_regret:proof}. Note that we have summed over meta-episodes $m\in[M]$; later, we use Lemma~\ref{lem:alg_regret} to directly bound $\sum_{m=1}^MN_m^1$.
Next, to bound $N_m^2$, note that we can use $\hat\pi$ for $H/4$ time steps in at most four episodes, since at the end of the fourth episode we would have a complete UCBVI episode (which has length $H$); thus, $N_m^2\le4$. Next, we use the following result to bound $N_m^3$.
\begin{lemma}
\label{lem:n3}
On event $\mathcal{E}$, for any state $s\in S$, a rollout using $\tilde{\pi}$ will reach state $s$ within $3H/4$ time steps after at most $6\log(1/\delta)$ episodes.
\end{lemma}
This result follows applying Markov's inequality in conjunction with Assumption~\ref{assump:ergodic}, which says the MDP $M$ is ergodic; thus, it visits $s$ with high probability early in the rollout. We give a proof in Appendix~\ref{sec:lem:n3:proof}. Finally, we have the following overall bound:
\begin{lemma}
\label{lem:Nmall}
On event $\mathcal{E}$, we have 
\begin{align*}
N_m\le\max\{30N_m^1\log(1/\delta),1\}.
\end{align*}
\end{lemma}
This result follows by the previous lemmas; we give a proof in Appendix~\ref{sec:lem:Nmall:proof}. In summary, we have
\begin{align}
\label{eqn:barrhobound2}
\bar\rho\le30H\log(1/\delta)\mathbb{E}\left[\sum_{m=1}^MN_m^1\right]\le\frac{960H^3S}{\eta}\sqrt{2ALHN},
\end{align}
where the first inequality follows by combining (\ref{eqn:barrhobound}) with Lemma~\ref{lem:Nmall} (which implies $N_m-1\le30\log(1/\delta)N_m^1$), and the second follows from Lemmas~\ref{lem:Nm} \&~\ref{lem:alg_regret}. Finally, Theorem~\ref{thm:regret} follows by combining (\ref{eqn:barrhobound2}) and Lemma~\ref{lem:rhohat}.
\end{proof}

% \begin{table}[t]
%   \small\sbox\tmpbox{%
% %    \begin{tabular}[b]{ p{1.2cm} r r r r}
% %      \toprule
% %      Exploration Budget & Ours & No Constraint & CQL\\
% %      \midrule
% %      $40$ & $2.890\pm 0.466$ & $2.791\pm 0.101$ & $2.544\pm 0.432$ \\
% %      $60$ & $2.712\pm 0.231$ & $2.791\pm 0.101$ & $2.544\pm 0.432$ \\
% %      $100$ & $2.711\pm 0.268$ & $2.791\pm 0.101$ & $2.544\pm 0.432$ \\
% %      \bottomrule
% %    \end{tabular}%
%     \begin{tabular}[b]{ p{1.2cm} r r r r}
%       \toprule
%       Exploration Budget & Ours & No Constraint & CQL \\
%       \midrule
%       $40$ & $2.89\pm 0.47$ & $2.79\pm 0.10$ & $2.54\pm 0.43$ \\
%       $60$ & $2.71\pm 0.23$ & $2.79\pm 0.10$ & $2.54\pm 0.43$ \\
%       $100$ & $2.71\pm 0.27$ & $2.79\pm 0.10$ & $2.54\pm 0.43$ \\
%       \bottomrule
%     \end{tabular}%
%     }%
%   \renewcommand*{\arraystretch}{0}
%   \begin{tabular*}{\linewidth}{@{\extracolsep\fill}p{\wd\tmpbox}p{70mm}@{}}
%     \usebox\tmpbox &
%     \aphics[width=\linewidth]{img/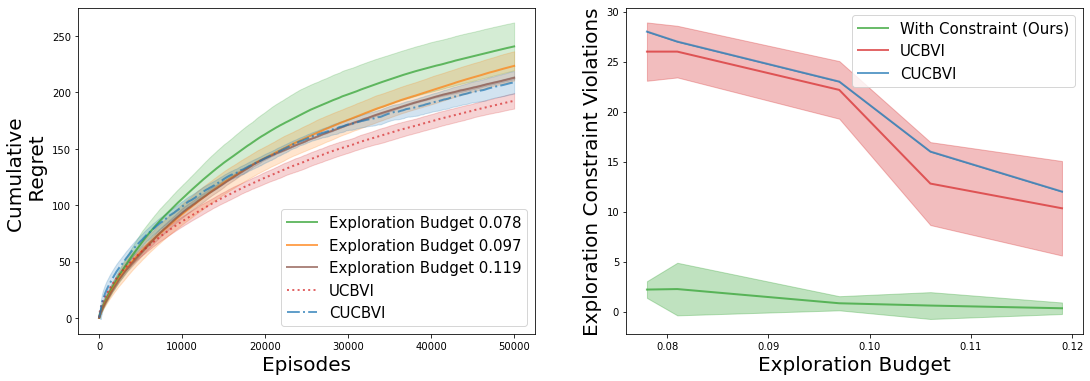} \\
%     \caption{HIV treatment: maximum discounted return of our approach vs. CQL in 1000 episodes. Numbers shown are in the millions scale (i.e., $\times 10^6$).}
%     \label{table:cql_comparison}
%     &
%       \captionof{figure}{Sepsis management: (left) regret of our approach vs. UCBVI and CUCBVI, and (right) exploration constraint violations of our approach vs. UCBVI and CUCBVI (for various exploration budgets).}
%     \label{fig:exp_sepsis}
%   \end{tabular*}
% \end{table}

\section{EXPERIMENTS}

\begin{table}[t]
\captionof{table}{HIV Treatment: maximum discounted return of our approach vs. CQL and $\epsilon$-greedy Q-learning over 1,000 episodes. Numbers are in the millions scale (i.e., $\times 10^6$).}\label{table:cql_comparison}
\centering
\begin{tabular}[b]{ll}
      \toprule
      \textbf{ALGORITHM} & \textbf{MAX DISCOUNTED RETURN} \\
      \midrule
      Ours Budget = 40 & $3.84\pm 0.04$\\
      \hline
      Ours Budget = 60 & $3.72\pm 0.09$\\
      \hline
      Ours Budget = 100 & $3.78\pm 0.03$\\
      \hline
      CQL & $3.75\pm 0.09$\\
      \hline
      $\epsilon$-greedy Q-learning & $3.82\pm 0.04$ \\
      \bottomrule
\end{tabular}%
% \begin{tabular}[b]{ p{2cm} p{1.6cm} r r r}
%       \toprule
%       Algorithms & Budget=40 & Budget=60 & Budget=100 \\
%       \midrule
%       Ours & $2.89\pm 0.47$ & $2.71\pm 0.23$ & $2.71\pm 0.27$ \\
%       \hline
%       No Constraint & $2.79\pm 0.10$ & $2.79\pm 0.10$ & $2.79\pm 0.10$ \\
%       \hline
%       CQL & $2.54\pm 0.43$ & $2.54\pm 0.43$ & $2.54\pm 0.43$ \\
%       \bottomrule
% \end{tabular}%
\end{table}

\begin{figure*}[ht]
\centering
         \centering
\includegraphics[width=0.6\linewidth]{}
\captionof{figure}{Sepsis management: regret (left) and exploration constraint violations (right) of our approach with different exploration budgets vs. UCBVI and CUCBVI.}
\label{fig:exp_sepsis}
\vspace{-0.2cm}
\end{figure*}

\begin{figure*}[ht]
\centering
         \centering
         \includegraphics[width=\textwidth]{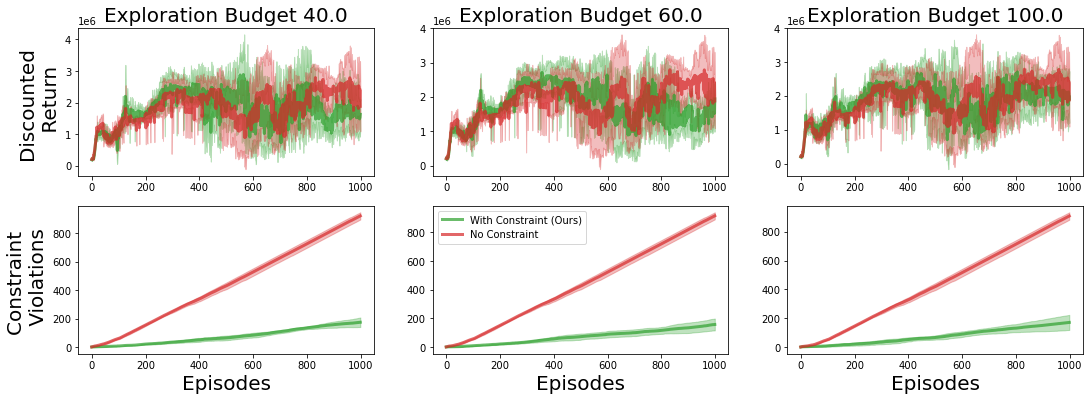}
\captionof{figure}{HIV treatment: discounted return (top) and exploration constraint violations (bottom) of our approach (green) with different exploration budgets vs. deep $Q$-learning (red).}
\label{fig:exp_hiv}
\vspace{-0.5cm}
\end{figure*}

We compare the performance and exploration constraint violations of our algorithm and baseline algorithms on two tasks. We give details on the experimental setup in Appendix~\ref{sec:expappendix}. 

\textbf{Sepsis management.}
To validate our approach in a realistic setting where excessive exploration on individuals is especially harmful, we simulate learning a sepsis treatment policy on the MIMIC-III dataset. Sepsis is the body's acute response to infection that can lead to organ dysfunction, tissue damage, and death. It is the leading cause of hospitalization in the U.S. and the third leading cause of death worldwide. The management of intravenous fluids and vasopressors are crucial in treatment, but current clinical practice is shown to be suboptimal. To develop a more efficient treatment strategy, we can model the problem as an MDP and apply reinforcement learning algorithms. The states are aggregated patient data, and rewards reflect the patient's outcome after medication doses~\citep{komorowski2018artificial}. 

Figure \ref{fig:exp_sepsis} (left) compares the performance of our algorithm with the UCBVI baseline and a conservative benchmark CUCBVI~\citep{garcelon2020conservative}. Our results are averaged over at least 4 trials. The regret of our algorithm converges at similar rates as UCBVI and CUCBVI, and our regret moves closer to UCBVI and CUCBVI as the exploration budget increases. Figure~\ref{fig:exp_sepsis} (right) shows that our algorithm satisfies our exploration constraint most of the time for all exploration budgets, while UCBVI and CUCBVI violate the constraint significantly more even for relatively large exploration budgets. In this setting, each episode represents a patient's treatment cycle, so each constraint violation indicates a patient has received a failed treatment or experienced an adverse outcome. Thus, it is highly undesirable to violate the constraint even for a few episodes.

\textbf{HIV Treatment.} Next, we consider learning an optimal HIV treatment based on the simulation in~\cite{ernst2006hiv}. Acquired immunodeficiency syndrome (AIDS) is a chronic and life-threatening disease caused by HIV. By 2018, there were 36.9 million people living with HIV worldwide and nearly 1 million death caused by AIDS annually~\citep{schwetz2019extended}. To design an optimal drug prescription policy for HIV-infected patients, prior work formulates the problem as a continuous-state MDP that tracks patients' physiological responses to different classes of drugs. We use our algorithm adapted to deep reinforcement learning; our implementation builds on~\cite{killian2017hidden}. 

Figure~\ref{fig:exp_hiv} (up) shows that the performance of our algorithm is comparable to that of Q learning. Figure~\ref{fig:exp_hiv} (down) shows the number of exploration constraint violations of Q learning and our algorithm as a function of the episode. Our results are averaged over at least 9 trials. As can be seen, our algorithm significantly reduces violations compared to vanilla Q learning, while reducing rewards only negligibly. We use conservative Q learning (CQL) as a baseline to our approach, since uniformly conservative exploration is always guaranteed for CQL under our definition. Table~\ref{table:cql_comparison} shows that our algorithm improves the maximum discounted return over CQL via carefully planned exploration instead of always being conservative. These results show that our algorithm successfully extends empirically to continuous-state MDPs. We discuss the extension in more detail in Appendix \ref{sec:expappendix}.

% \textcolor{red}{
% \textbf{HIV Treatment.}
% %
% Next, we consider learning an optimal HIV treatment based on the simulation in~\cite{ernst2006hiv}. Acquired immunodeficiency syndrome (AIDS) is a chronic and life-threatening disease caused by HIV. By 2018, there were 36.9 million people living with HIV worldwide and nearly 1 million death caused by AIDS annually~\citep{schwetz2019extended}. To design an optimal drug prescription policy for HIV-infected patients, prior work formulates the problem as a continuous-state MDP that tracks patients' physiological responses to different classes of drugs. We use our algorithm adapted to deep reinforcement learning; our implementation builds on~\cite{killian2017hidden}. 
% %which is based on the conservative Q-learning (CQL) algorithm~\citep{kumar2020conservative}. In particular, we revert the sign of the penalty term in CQL to obtain a heuristic upper bound for the true Q-functions, which we then utilize to check for the safety constraint at every time step.
% }

% \textcolor{red}{
% Figure~\ref{fig:exp_hiv} (up) shows that the performance of our algorithm is comparable to that of Q learning. Figure~\ref{fig:exp_hiv} (down) shows the number of safety violations of Q learning and our algorithm as a function of the episode. As can be seen, our algorithm significantly improves safety compared to vanilla Q learning, which violates safety noticeably more while not displaying substantial improvement in reward. These results show that our algorithm extends to continuous-state MDPs.}

% \textcolor{red}{Is the benchmark CQL or Q learning?}

\section{CONCLUSION}

We have proposed a novel reinforcement learning algorithm that ensures close performance compared to our current knowledge uniformly across every step of every episode with high probability. We derive assumptions on the MDP under which both uniformly conservative exploration and sublinear regret can be achieved. Our theoretical results show that the price of uniformly conservative exploration in learning is negligible---i.e., a constant, $T$-independent factor. Our experiments demonstrate that our algorithm can achieve similar performance to state-of-the-art approaches---even in settings with continuous state spaces---while significantly reducing excessive exploration on individual episodes. Our work has ethical considerations insofar as we are proposing a way to reduce the harm of reinforcement learning in practice. Before deploying our approach in any domain, it is critical to ensure that the algorithm does not harm the individuals it impacts, either through excessive exploration (the focus of this work) or other context-specific factors.
%One limitation of our approach is that we need to make several assumptions about the MDP; a key direction for future work is relaxing these assumptions. Furthermore, we make specific assumptions about the safety property; extending our techniques to additional properties is another direction for future work. 
%Our work has ethical considerations insofar as we are proposing a way to improve safety of reinforcement learning in practice; before deploying our approach (or any machine learning algorithm) in any domain, it is critical to ensure the algorithm does not harm individuals.

\subsubsection*{Acknowledgements}
This work was generously supported by NSF Award CCF-1910769, NSF Award CCF-1917852, ARO Award W911NF-20-1-0080, and a grant from Analytics at Wharton.

{\small
\bibliography{refs}
\bibliographystyle{abbrvnat}
}

\clearpage
\appendix

\onecolumn

\section{Proofs for Section~\ref{sec:theory}}

\subsection{High probability event} 
\label{sec:event}
We first introduce the high probability event $\mathcal{E}$ under which the concentration inequalities described in the policy construction and in UCBVI-CH hold. Let $\mathcal{E}_{1, \delta}$ be the high probability event under which the UCBVI-CH regret analysis holds. This event $\mathcal{E}_{1, \delta}$ is defined in the equation on the bottom of page 16 in the appendices of \cite{azar2017minimax}. The proof that $\mathcal{E}_{1, \delta}$ holds with probability at least $1-\delta$ is proved in the subsequent Lemma 1. We then define
\begin{align*}
c_p(n)&\coloneqq 2\sqrt{\frac{SL}{\max\{1,n\}}},\\
c_r(n)&\coloneqq 2\sqrt{\frac{L}{\max\{1,n\}}}.
\end{align*}
Let $\mathcal{P}$ denote the set of all probability distributions on the states $S$, then construct the confidence sets for every $k=1,\dots,N$ and $(s,a)\in S\times A$
\begin{align*}
B_p^k(s,a)&\coloneqq\Big\{\tilde{R}(\cdot\mid s,a):|\tilde{R}(s,a)-R(s,a)|\le c_r(N_k(s,a))\Big\},\\
B_r^k(s,a)&\coloneqq\Big\{\tilde{P}(\cdot\mid s,a)\in\mathcal{P}:\norm{\tilde{P}(\cdot\mid s,a)-P(\cdot\mid s,a)}_1\le c_r(N_k(s,a))\Big\}.
\end{align*}
Next, we define the random event $\mathcal{E}_{2,\delta}$
\begin{align*}
    \mathcal{E}_{2,\delta}\coloneqq&\bigcap_{(s,a)\in S\times A}\bigcap_{k\in[N]}\left\{\hat{P}_k(\cdot\mid s,a)\in B_p^k(s,a)\right\}\left\{\hat{R}_k(s,a)\in B_r^k(s,a)\right\}.
\end{align*}
where $N_k(s,a)$ is the number of observations of state-action pair $(s,a)$ up to episode $k$. Finally, letting $\mathcal{E}\coloneqq\mathcal{E}_{1,\delta}\cap\mathcal{E}_{2,\delta}$, we conclude that $\mathcal{E}$ holds with probability at least $1-2\delta$. Indeed, 
\begin{align*}
    \Pr(\mathcal{E}_{2,\delta}^c)&\le \sum_{s,a}\sum_{k} \frac{2\delta}{5NSA}\le\delta
\end{align*}
and the claim follows from a union bound.

\subsection{Proof of Lemma \ref{lem:ucblcb}}
\label{sec:lem:ucblcb:proof}

\begin{proof}
First, we prove claim (i). We show by induction that $\bar{Q}_{k,t}^{(\pi)}$ is indeed a lower bound on $Q_t^{(\pi)}$, the real Q functions. Define the sets
\begin{align*}\bar{\Omega}_{k,t}^{(\pi)} &= \{\bar{Q}_{i,j}^{(\pi)} \le Q^{(\pi)}_j, \forall (i,j), i\in [N], j\in [H], i<k\lor(i=k\land j>t)\}
\end{align*}
We want to show that the set of events $\{\bar{\Omega}_{k,t}^{(\pi)}\}_{k\in[K],t\in[H]}$ hold under the event $\mathcal{E}$. 

We proceed by induction. For $t=H$, by definition, $\bar{Q}_{k,H}^{(\pi)}=Q_{H}^{(\pi)}$, so $\bar{Q}_{k,t}^{(\pi)}\le Q_{t}^{(\pi)}$ holds. Now, assuming $\bar{Q}_{k,t+1}^{(\pi)}\le Q_{t+1}^{(\pi)}$ holds, we want to show that $\bar{Q}_{k,t}^{(\pi)}\le Q_{t}^{(\pi)}$ also holds. To this end, note that
\begin{align*}
Q_{t}^{(\pi)}(s,a)-\bar{Q}_{k,t}^{(\pi)}(s,a) &=b_k(s,a)+P(\cdot\mid s,a)V_{t+1}^{(\pi)} - \hat{P}(\cdot\mid s,a)\bar{V}_{k,t+1}^{(\pi)}+R(s,a)-\hat{R}(s,a)\\
&=b_k(s,\pi(s)) + \left(P(\cdot\mid s,a) - \hat{P}(\cdot\mid s,a)\right)V_{t+1}^{(\pi)}(s)\\
&\qquad+\hat{P}(\cdot\mid s,a)\left(V_{t+1}^{(\pi)} - \bar{V}_{k,t+1}^{(\pi)}\right)(s)+R(s,a)-\hat{R}(s,a)\\
&=b_k(s,\pi(s)) + \left(P(\cdot\mid s,a) - \hat{P}(\cdot\mid s,a)\right)V_{t+1}^{(\pi)}(s)\\
&\qquad+\hat{P}\left(\cdot\mid s,a)(Q_{t+1}^{(\pi)} - \bar{Q}_{k,t+1}^{(\pi)}\right)(s,\pi(s))+R(s,a)-\hat{R}(s,a)\\
&\ge b_k(s,\pi(s)) + \left(P(\cdot\mid s,a) - \hat{P}(\cdot\mid s,a)\right)V_{t+1}^{(\pi)}(s)+R(s,a)-\hat{R}(s,a)
\end{align*}
where we use the induction hypothesis in the last inequality. The event $\mathcal{E}$, by H\"older's inequality, implies that
\begin{align*}
&\left|\left(P(\cdot\mid s,a) - \hat{P}(\cdot\mid s,a)\right)V_{t+1}^{(\pi)}(s)+R(s,a)-\hat{R}(s,a)\right|\\ 
&\le\|P(\cdot\mid s,a) - \hat{P}(\cdot\mid s,a)\|_1 \|V_{t+1}^{(\pi)}\|_{\infty}+2\sqrt{L/\max\{1,N_k(s,\pi(s))\}}\\
&\le 2H\sqrt{SL/\max\{1,N_k(s,\pi(s))\}}+2\sqrt{L/\max\{1,N_k(s,\pi(s))\}}\\
&\le b_k(s,\pi(s))
\end{align*}
as claimed. Next, we prove claim (ii), again by backwards induction. Define the sets
\begin{align*}
\hat{\Omega}_{k,t}^{(\pi)} &= \{\hat{V}_{i,j}^{(\pi)} \ge V^{(\pi)}_j, \forall (i,j), i\in [N], j\in [H],i<k\lor(i=k\land j>t)\}.
\end{align*}
We want to show that the set of events $\{\hat{\Omega}_{k,t}^{(\pi)}\}_{k\in[K],t\in[H]}$ hold under the event $\mathcal{E}$. Again we proceed by induction. By definition, $\hat{V}_{k,H}^{(\pi)}=V_{H}^{(\pi)}$. Assuming $\hat{V}_{k,t+1}^{(\pi)}\ge V_{t+1}^{(\pi)}$ holds, we want to show that $\hat{V}_{k,t}^{(\pi)}\ge V_{t}^{(\pi)}$ also holds. To this end, note that
\begin{align*}
\hat{V}_{k,t}^{(\pi)}(s)-V_{t}^{(\pi)}(s)&=b_k(s,\pi(s)) +\hat{P}_k^{(\pi)}\hat{V}_{k,t+1}^{(\pi)}(s)-P^{(\pi)}V_{t+1}^{(\pi)}(s)+\hat{R}(s,\pi(s))-R(s,\pi(s))\\
&=b_k(s,\pi(s)) + (\hat{P}_k^{(\pi)}-P^{(\pi)})V_{t+1}^{(\pi)}(s) + \hat{P}_k^{(\pi)}(\hat{V}_{k,t+1}^{(\pi)}-V_{t+1}^{(\pi)})(s)\\
&\qquad+\hat{R}(s,\pi(s))-R(s,\pi(s))\\
&\ge b_k(s,\pi(s)) + (\hat{P}_k^{(\pi)}-P^{(\pi)})V_{t+1}^{(\pi)}(s)+\hat{R}(s,\pi(s))-R(s,\pi(s))
\end{align*}
where we use the induction hypothesis in the last inequality. The event $\mathcal{E}$, by H\"older's inequality, implies that
\begin{align*}
&|(\hat{P}_k^{(\pi)}-P^{(\pi)})V_{t+1}^{(\pi)}(s)+\hat{R}(s,\pi(s))-R(s,\pi(s))|\\ 
&\le||\hat{P}_k^{(\pi)}-P^{(\pi)}||_1 ||V_{t+1}^{(\pi)}||_{\infty}+2\sqrt{L/\max\{1,N_k(s,\pi(s))\}}\\
&\le 2H\sqrt{SL/\max\{1,N_k(s,\pi(s))\}}+2\sqrt{L/\max\{1,N_k(s,\pi(s))\}}\\
&\le b_k(s,\pi(s))
\end{align*}
as claimed.
\end{proof}

\subsection{Proof of Lemma~\ref{lem:rhohat}}
\label{sec:lem:rhohat:proof}

\begin{proof}
Note that
\begin{align*}
\hat\rho&\coloneqq \mathbb{E}\left[\sum_{m=1}^M V_1^*(s_1)-V_1^{({\hat\pi}^m)}(s_1)\right]\\
&=\sum_{m=1}^M\sum_{(n,t)\in\hat\alpha_{m}}\mathbb{E}\left[V_t^*(s_{m,n,t})-Q_t^*(s_{m,n,t},\hat\pi(s_{m,n,t}))\right]\\
&=\sum_{m=1}^M\sum_{t=1}^H\mathbb{E}\left[V_t^*(\hat{s}_{m,t})-Q_t^*(\hat{s}_{m,t},\hat\pi(\hat{s}_{m,t}))\right]
\end{align*}
where the last equality follows after relabeling the steps in the UCBVI pseudo-episodes. We then apply the same argument in the proof of Theorem 1 in \cite{azar2017minimax}. Note that the pigeon-hole principle only works if we take the total time steps in the theorem to be the total time steps of the whole $M$ meta-episodes. Therefore, the desired bound holds with probability at least $1-\delta$. 
\end{proof}

\subsection{Proof of Lemma~\ref{lem:Nm}}
\label{sec:lem:Nm:proof}

\begin{proof}
First, we sum the condition $z_{m,n,H}^*\ge\eta/2$ over episode $n\in[N_m]$, which gives
\begin{align*}
N_m^1\cdot\frac{\eta}{2}\le\sum_{n=1}^{N_m}z_{m,n,H}^*
=\sum_{n=1}^{N_m} \sum_{t=1}^H \max\Big\{V_t^{(\bar\pi^{m,n})}(s_{m,n,t})-Q_t^{(\bar\pi^{m,n})}(s_{m,n,t},\hat{\pi}(s_{m,n,t})),0\Big\}.
\end{align*}
Now, note that only when $a_{m,n,t}\neq\bar\pi^{m,n}(s_{m,n,t})$, $V_t^{(\bar\pi^{m,n})}(s_{m,n,t})\neq Q_t^{(\bar\pi^{m,n})}(s_{m,n,t},a_{m,n,t})$---i.e., when $(s_{m,n,t},a_{m,n,t})$ is part of the meta-rollout $\hat{\alpha}_m$. Thus, we can restrict the sum to steps in $\hat\alpha_m$:
\begin{align*}
N_m^1\cdot\frac{\eta}{2}
&\le\sum_{t=1}^H \max\Big\{V_t^{(\bar\pi^{m,n_t})}(\hat{s}_{m,t})-Q_t^{(\bar\pi^{m,n_t})}(\hat{s}_{m,t},\hat{\pi}(\hat{s}_{m,t})),0\Big\} \\
&\le\sum_{t=1}^H\hat{V}_t^{(\bar\pi^{m,n_t})}(\hat{s}_{m,t})-\bar{Q}_t^{(\bar\pi^{m,n_t})}(\hat{s}_{m,t},\hat{\pi}(\hat{s}_{m,t})) \\
&\le\sum_{t=1}^H\hat{V}_t^{(\hat\pi^m)}(\hat{s}_{m,t})-\bar{Q}_t^{(\bar\pi^{m,n_t})}(\hat{s}_{m,t},\hat{\pi}(\hat{s}_{m,t})).
\end{align*}
Here, the second line follows since by Lemma~\ref{lem:ucblcb}, $\hat{V}^{(\bar\pi)}$ is an upper bound and $\bar{Q}^{(\bar\pi)}$ is a lower bound on event $\mathcal{E}$, and since $\hat{V}_t^{(\bar\pi)}(s)\ge\bar{V}_t^{(\bar\pi)}(s)\ge\bar{Q}_t^{(\bar\pi)}(s,a)$ for any $a$ since $\bar\pi$ by definition of $\bar\pi$. Finally, the third line follows since $\hat\pi$ is optimistic.
\end{proof}

\subsection{Proof of Lemma \ref{lem:alg_regret}}
\label{sec:lem:alg_regret:proof}

\begin{proof}
By Bellman equations, 
\begin{align*}
\hat{V}_t^{({\hat\pi}^m)}(\hat{s}_{m,t})-\bar{Q}_t^{({\bar\pi}^{m,n_t})}(\hat{s}_{m,t},\hat{\pi}(\hat{s}_{m,t})) & = \bigg(\hat{R}(\hat{s}_{m,t}, \hat\pi(\hat{s}_{m,t}))+\left[P^{(\hat\pi)}\right]^\mathsf{T}\hat{V}^{(\hat\pi)}_{t+1}(\hat{s}_{m,t+1})+b_m(\hat{s}_{m,t},\hat\pi(\hat{s}_{m,t}))\bigg) \\
& \qquad -\bigg(\hat{R}(\hat{s}_{m,t}, \hat\pi(\hat{s}_{m,t}))+\left[P^{(\hat\pi)}\right]^\mathsf{T}\bar{V}^{(\bar\pi)}_{t+1}(\hat{s}_{m,t+1})-b_m(\hat{s}_{m,t},\hat\pi(\hat{s}_{m,t}))\bigg)\\
& \le \max_{q\in B_p^m}(q-p^*)^\mathsf{T}\hat{V}^{(\hat\pi)}_{t+1}(\hat{s}_{m,t+1})
-\min_{q\in B_p^m}(q-p^*)^\mathsf{T}\bar{V}^{(\bar\pi)}_{t+1}(\hat{s}_{m,t+1})\\
& \qquad +2b_m(\hat{s}_{m,t},\hat\pi(\hat{s}_{m,t})) +{p^*}^\mathsf{T}\left(\hat{V}^{(\hat\pi)}_{t+1}-\bar{V}^{(\bar\pi)}_{t+1}\right)(\hat{s}_{m,t+1}).
\end{align*}
We define 
\begin{align*}
(a)_{m,t}&\coloneqq\max_{q\in B_p^m}(q-p^*)^\mathsf{T}\hat{V}^{(\hat\pi)}_{t}(\hat{s}_{m,t})-\min_{q\in B_p^m}(q-p^*)^\mathsf{T}\bar{V}^{(\bar\pi)}_{t}(\hat{s}_{m,t})\\
(b)_{m,t}&\coloneqq2b_m(\hat{s}_{m,t},\hat\pi(\hat{s}_{m,t})).
\end{align*}
Then, for each $t$, note that 
\begin{align*}
\hat{V}_t^{(\hat\pi)}(\hat{s}_{m,t})-\bar{V}_t^{(\bar\pi)}(\hat{s}_{m,t})&=\hat{V}_t^{(\hat\pi)}(\hat{s}_{m,t})-\max_{a\in A}\bar{Q}_t^{(\bar\pi)}(\hat{s}_{m,t},a)
\le\hat{V}_t^{(\hat\pi)}(\hat{s}_{m,t})-\bar{Q}_t^{(\bar\pi)}(\hat{s}_{m,t},\hat\pi(\hat{s}_{m,t})).
\end{align*}
Thus, we have
\begin{align*}
\hat{V}_t^{(\hat\pi)}(\hat{s}_{m,t})-\bar{Q}_t^{(\bar\pi)}(\hat{s}_{m,t},\hat{\pi}(\hat{s}_{m,t}))\le(a)_{m,t+1}+(b)_{m,t}+{p^*}^\mathsf{T}\bigg(\hat{V}_{t+1}^{(\hat\pi)}-\bar{Q}_{t+1}^{(\bar\pi)}(\cdot,\hat\pi)\bigg)(\hat{s}_{m,t+1}).
\end{align*}
Continuing this argument, and noticing that by construction $\hat{V}^{(\hat\pi)}_H(\hat{s}_{m,H})=\bar{Q}^{(\bar\pi)}_H(\hat{s}_{m,H},\hat\pi(\hat{s}_{m,H}))=0$, we have by induction that
\begin{align*}
\hat{V}^{(\hat\pi)}_t(\hat{s}_{m,t})-\bar{Q}^{(\bar\pi)}_t(\hat{s}_{m,t},\hat\pi(\hat{s}_{m,t}))\le\sum_{\ell=0}^{H-t}((a)_{m,t+\ell+1}+(b)_{m,t+\ell})
\end{align*}
Summing over the whole UCBVI episode, under the event $\mathcal{E}$, we have
\begin{align*}
&\sum_{t=1}^H\hat{V}_t^{({\hat\pi}^m)}(\hat{s}_{m,t})-\bar{Q}_t^{({\bar\pi}^{m,n_t})}(\hat{s}_{m,t},\hat{\pi}(\hat{s}_{m,t}))\\
&\le\sum_{t=1}^H\sum_{\ell=0}^{H-t}((a)_{m,t+\ell+1}+(b)_{m,t+\ell})\\
&=\sum_{\ell=1}^H\sum_{t=\ell}^H((a)_{m,t+1}+(b)_{m,t})\\
&\le\sum_{\ell=1}^H\sum_{t=1}^{H}  \bigg(4\sqrt{\frac{SL}{\max\{1,N_m(\hat{s}_{m,t},\hat{a}_{m,t})\}}}+8H\sqrt{\frac{SL}{\max\{1,N_m(\hat{s}_{m,t},\hat{a}_{m,t})\}}}\bigg)\\
&=H\sum_{t=1}^{H}\bigg(4\sqrt{\frac{SL}{\max\{1,N_m(\hat{s}_{m,t},\hat{a}_{m,t})\}}}+8H\sqrt{\frac{SL}{\max\{1,N_m(\hat{s}_{m,t},\hat{a}_{m,t})\}}}\bigg)\\
&=4H\sqrt{SL}\left(1+2H\right)\sum_{t=1}^H\sqrt{\frac{1}{\max\{1,N_m(\hat{s}_{m,t},\hat{a}_{m,t})\}}}
\end{align*}
where $L=\log(5SAH\sum_{m=1}^M N_m/\delta)$. Then, summing over the $M$ meta-episodes, we have
\begin{align*}
&\sum_{m=1}^M\sum_{t=1}^H\hat{V}_t^{({\hat\pi}^m)}(\hat{s}_{m,t})-\bar{Q}_t^{({\bar\pi}^{m,n_t})}(\hat{s}_{m,t},\hat{\pi}(\hat{s}_{m,t}))\\
&\le4H\sqrt{SL}(1+2H)\sum_{m=1}^M\sum_{t=1}^H\sqrt{\frac{1}{\max\{1,N_m(\hat{s}_{m,t},\hat{a}_{m,t})\}}}\\
&\le4H\sqrt{SL}(1+2H)\sum_{s,a}\sum_{n=1}^{N_M(s,a)}\sqrt{\frac{1}{n}}\\
&\le12H^2S\sqrt{ALHN}.
\end{align*}

\end{proof}
%\end{lemma}

\subsection{Proof of Lemma~\ref{lem:n3}}
\label{sec:lem:n3:proof}

\begin{proof}
Under our assumption that the MDP is ergodic, let
\[\Gamma = \max_{s=s'}\max_{\pi}\mathbb{E}[T^{\pi}(s,s')] \le \frac{H}{2}\]
be the worst-case diameter. Then given any initial state $s'$, target state $s$ and shield policy $\tilde{\pi}$, the expected exit time
\[\mathbb{E}[T^{\tilde\pi}(s',s)]\le\Gamma.\]
By Markov's inequality, 
\[\Pr(T^{\tilde\pi}(s',s)\ge\alpha H)\le\frac{1}{2\alpha}.\]
Therefore, with probability at least $1-(\frac{1}{2\alpha})^N$, during $N$ episodes, there exists one where the MDP will reach $s$ from $s'$ using $\tilde\pi$ within $\alpha H$ steps. Letting $N = \log(\frac{1}{\delta})/\log(2\alpha)$ and $\alpha=3/4$ completes the proof.
\end{proof}

\subsection{Proof of Lemma~\ref{lem:Nmall}}
\label{sec:lem:Nmall:proof}

\begin{proof}
If $N_m=1$, then the bound trivially holds. Otherwise, note that we must have $N_m^1\ge1$, since if $N_m\neq1$ then we must have exhausted the exploration budget during the first episode $n=1$. Next, by Lemma~\ref{lem:n3}, we have $N_m^3\le(N_m^1+N_m^2)\max\{6\log(1/\delta)-1,0\}$---i.e., it is bounded by the number of ``successful'' episodes $N_m^1+N_m^2$ times the maximum number of tries $\max\{6\log(1/\delta)-1,0\}$ before finding a successful episode. Together with the fact that $N_m^2\le4$, we have
\begin{align*}
N_m&=N_m^1+N_m^2+N_m^3\\
&\le N_m^1+4+(N_m^1+4)\max\{6\log(1/\delta)-1,0\} \\
&\le5N_m^1+5N_m^1\max\{6\log(1/\delta)-1,0\} \\
&\le30N_m^1\log(1/\delta),
\end{align*}
where on the second line, we have used the fact that we are considering the case $N_m^1\ge1$, which implies $N_m^1+4\le5N_m^1$. The claim follows.
\end{proof}

\section{Experiment Details}
\label{sec:expappendix}

\textbf{Sepsis management.}
We adopt the MDP trained in \cite{komorowski2018artificial} as the underlying MDP we need to learn, set the horizon to $H=20$, and run all tests over $N=50,000$ total episodes. For the sake of completeness, we describe the detailed construction of the MDP as follows. A set of 750 mutually exclusive states encode patients' health states constructed by clustering patients' data. The actions are the dose prescribed of intravenous fluids and vasopressors converted into 25 discrete decisions, with the dose of each treatment discretized into one of five possible dose levels. The transition matrix describes the state transition dynamics, which can be computed via taking sample averages. A positive reward is given at the end of each patient's treatment cycle if the patient survives, and a negative reward is issued if the patient dies. Note that the MDP is hidden to our algorithm, UCBVI and CUCBVI. We also use an offline dataset of 500 randomly generated past episodes to warm-start the algorithms, and then run all algorithms for $N=50,000$ online episodes respectively. We compute the regret and number of exploration constraint violations corresponding to various exploration budgets $\eta$ lying between 0.078 and 0.119. To account for the randomness of each training, we run the experiment for each $\eta$ at least four times and take the average over the regret and constraint violations. The graphs shown in Figure \ref{fig:exp_sepsis} are plotted according to the average values. We also compare our algorithm to CUCBVI introduced in \cite{garcelon2020conservative}

\textbf{HIV treatment.} 
We build on the implementation in \cite{killian2017hidden}. There are 6 state variables represented as a 6-dimensional continuous vector that encodes concentrations of 6 different cells, measured every five days to determine the drug combination for the next five days. There are 4 actions corresponding to 2 drugs being activated or not, measured every five days. In particular, these four on-off combinations of drug administration consist of: RTI (Reverse Transcriptase Inhibitors) and PI (Protease Inhibitors) on, only RTI on, only STI on, RTI and PI off. The horizon is set to 200, which correspond to 1,000 days of monitoring as the state of each simulated patient is observed and actions updated every five days. The reward is a function of T-cell counts, free HIV viruses, anti-HIV immune response, and side effects. We collect 10,000 offline samples randomly before training, then run both CQL and our algorithm for $N=1,000$ online episodes respectively. We plot the discounted reward and the number of exploration constraint violations corresponding to various exploration budgets $\eta$ lying between $40$ and $100$. Table~\ref{table:cql_comparison} shows the comparison of maximum discounted returns of our algorithm versus CQL, and $\epsilon$-greedy Q-learning with an annealing $\epsilon$ schedule. We have additionally run soft actor-critic (SAC) for comparison, but the SAC agent performs poorly in this environment and quickly gets stuck in a constant suboptimal policy. Compared to $\epsilon$-greedy, our algorithm significantly reduces the number of constraint violations without sacrificing performance.

We would like to note that our proposal to extend our algorithm to MDPs with continuous states are purely empirical, as the extension does not satisfy our assumptions on tabular MDPs and will thus make theoretical analysis significantly harder, taking it beyond the scope of this work. For future work, we are thinking about using linear function approximation to model MDPs with continuous states and defining a “small distance” between states as similarity between linear features.

\textbf{Inventory control.}
We consider a single-product stochastic inventory control problem based on \cite{garcelon2020conservative}, but with a finite horizon. At the beginning of each month $t$, the manager notes the current inventory of a single product, and then decide the number of items to order from a supplier before observing the random demand. They have to account for the tradeoff between the costs of keeping inventory and lost sales or penalties resulting from being unable to satisfy customer demand. The objective is to maximize profit during the entire decision-making process.

The state space is the number of items in the inventory, $S=\{0, \dots, M\}$, where $M=5$
is the maximum capacity. The action space is $A_s=\{0, \dots, M-s\}$ for each state $s\in S$. Given inventory state $s_t$ at the beginning of month $t$, the number of items $a_t$ to order is determined by the manager. We assume that a time-homogeneous uniform distribution $D_t$ generates the random demand of each month $t$, and that the horizon is $H=20$. The inventory at the beginning of month $t+1$ is given by
\begin{equation*}
s_{t+1}=\max\{0, s_t+a_t-D_t\}.
\end{equation*}
Next, we define the associated cost functions. We assume a fixed cost $K=2$ for placing orders and a variable cost $c(u)=2u$ that increases with the quantity ordered:
\begin{align*}
O(u)=
\begin{cases}
K+c(u)~&\text{if }u>0\\
0&\text{if }u=0.
\end{cases}
\end{align*}
The cost of maintaining an inventory of $u$ units for a month is represented by the nondecreasing function $h(u)=u$. If the demand is $j$ units and sufficient inventory is available to meet the demand, the manager receives a revenue of $f(j)$. Finally, the reward is defined as $r(s_t, a_t, s_{t+1})=-O(a_t)-h(s_t+a_t)+f(s_t+a_t-s_{t+1})$, where we take $f(u)=8u$ in our experiments. We normalize the rewards so that they are supported in $[0,1]$.  We use an offline dataset of $1,500$ randomly generated past episodes to warm-start the algorithms, and then compute the regret and number of exploration constraint violations corresponding to various exploration budgets $\eta$. We use $N=50,000$ total episodes.

Figure~\ref{fig:exp_invcon} (a) shows that the regret of our algorithm starts out linearly increasing, since in the beginning the meta algorithm is forced to switch to the baseline policy a lot in one meta-episode to satisfy the constraint. As historical data accrues, our algorithm uses the UCBVI policy more frequently since its reward deficit decreases. At some point, our algorithm starts to converge at a similar rate as UCBVI. Note that UCBVI converges faster since it ignores the exploration constraint and can explore arbitrarily, even if some actions result in poor values. Figure~\ref{fig:exp_invcon} (b) shows the number of times the exploration constraint is violated. Our algorithm almost always satisfies the constraint for all shown values of $\eta$, whereas UCBVI fails to do so in a significant number of episodes, especially when $\eta$ is small.

\begin{figure*}[t]
\centering
\includegraphics[width=0.65\linewidth]{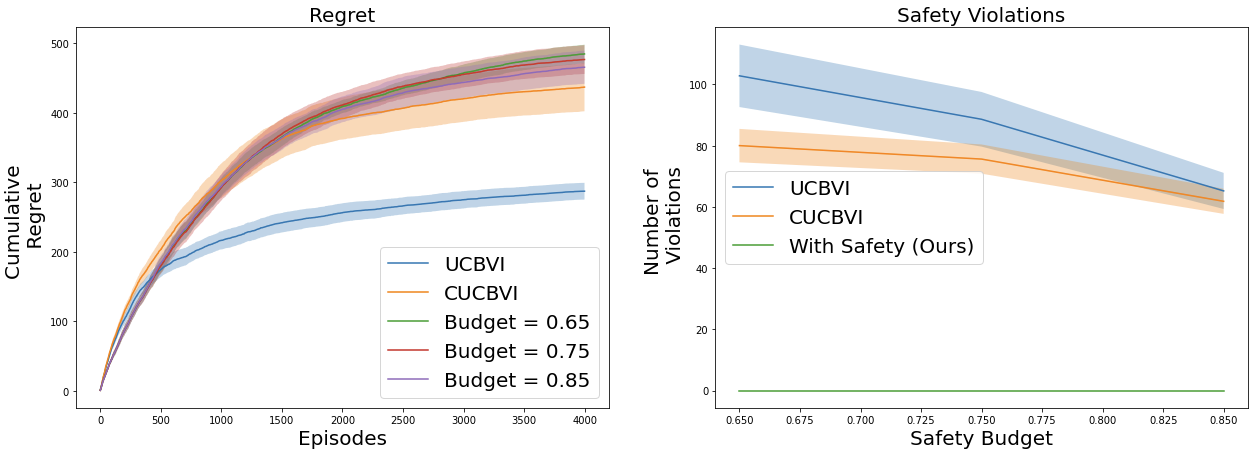}
    % \begin{subfigure}[b]{0.48\textwidth}
    %      \centering
    %      \includegraphics[width=\textwidth]{img/4.png}
    %      \caption{}
    %      \label{}
    %  \end{subfigure}
    %  \begin{subfigure}[b]{0.48\textwidth}
    %      \centering
    %      \includegraphics[width=\textwidth]{img/5.png}
    %      \caption{}
    %      \label{}
    %  \end{subfigure}
\captionof{figure}{Inventory control: regret (left) and exploration constraint violations (right) of our approach with different exploration budgets $\eta$ vs. UCBVI.}
\label{fig:exp_invcon}
\end{figure*}

\section{Further Discussions}\label{sec:disappendix}
\textbf{Contributions.} Our main contributions are: (i) We propose a new notion of “uniformly conservative exploration” in equation \eqref{eqn:safety} for MDPs, (ii) we devise a novel “meta-episodic” online reinforcement learning algorithm to maintain this exploration constraint, and (iii) we prove that our algorithm ensures uniformly conservative exploration while achieving sublinear regret. Importantly, our “meta-episodic” strategy is both a novel algorithmic approach for ensuring unbiased exploration, and also requires novel proof techniques to ensure bounded regret.

\textbf{Related work.} There has been a great deal of recent interest in safe reinforcement learning \citep{garcia2015comprehensive}, although it has largely focused on guaranteeing safety rather than proving regret bounds (that guarantee convergence to an optimal policy). Furthermore, most of these approaches focus on safety constraints in the form of safe regions, where the goal is to stay inside the safe region \citep{li2020robust}. Such constraints are common in robotics, but less so for other applications of reinforcement learning such as healthcare, education, and operations research. In contrast, our approach studies a conservative exploration approach that focuses on avoiding underperforming an existing policy, which is more applicable in these settings.

\end{document}